\documentclass[runningheads]{llncs}
\usepackage[T1]{fontenc}
\usepackage{graphicx,verbatim}
\usepackage{mathtools}
\usepackage{amssymb}
\usepackage{tabularx}
\usepackage{multirow}
\usepackage{subfigure}
\usepackage{bm}

\begin{document}
\title{PCaPaint: Prostate Cancer Inpainting by Mitigating Shortcut Learning}
%

\author{Levente Lippenszky \and
Hongxu Yang \and 
Marcell Dömötör \and 
Krisztian Koos \and 
László Ruskó}  
\authorrunning{L. Lippenszky et al.}
\institute{GE HealthCare \\
    \email{levente.lippenszky@gehealthcare.com}}
  
\maketitle              
\begin{abstract}
The development of AI systems for tumor-specific applications is limited by the scarcity of labeled data. Synthetic tumor inpainting offers a promising approach but faces challenges for prostate cancer MRI which contains high-resolution multi-sequence data. Although methods leveraging latent diffusion models (LDMs) enable large-volume synthesis, they are prone to shortcut learning, simply reproducing the condition image created by masking the lesion region. In this work, we introduce PCaPaint, a prostate cancer inpainting method based on LDMs that explicitly addresses this failure mode. To overcome shortcut learning that compromises synthetic tumor texture, we propose a simple yet efficient conditioning strategy in which the condition image is filled with Gaussian noise, and we provide theoretical justification. In addition, we propose a novel training objective for LDM 
that emphasizes the error within the lesion region. Furthermore, we introduce a multi-sequence latent design, in which T2w scans and DWI\&ADC scans are compressed using two separate autoencoders  to preserve their distinct frequency characteristics. Extensive experiments demonstrate that the generated synthetic data improves downstream performance in prostate lesion segmentation, patient-level classification and lesion-level detection. Furthermore, our method significantly outperforms a recent state-of-the-art LDM-based tumor inpainting method both in downstream performance and in synthetic image quality.

\keywords{Prostate cancer  \and Inpainting \and Shortcut learning.}

\end{abstract}
\section{Introduction}
The development of AI systems in healthcare is often constrained by the scarcity of labeled data~\cite{grabke2025mitigating,jin2023label}. This challenge is even more pronounced in tumor‑specific applications, where pathological cases are far less common than normal patient data~\cite{zhang2024lefusion}. Furthermore, developing robust deep learning models for these applications requires substantial amount of data with sufficient diversity~\cite{liu2023clip,wang2021annotation,chou2024acquiring}. However, collecting these datasets is expensive, and the difficulty of accurately annotating tumors intensifies this challenge~\cite{chen2024towards,yang2025synbt}.

Synthetic tumor inpainting offers a promising approach for generating diverse lesion data while eliminating the need for manual annotation. However, prostate biparametric MRI (bpMRI) synthesis poses significant challenges as it involves high-resolution 3D data and three MRI sequences with substantially different frequency characteristics. In recent years, tumor inpainting methods based on diffusion models~\cite{ho2020denoising} have emerged covering a large variety of lesion types. LeFusion~\cite{zhang2024lefusion} incorporates forward-diffused background context into the reverse diffusion process, but its image-space design restricts its use for prostate bpMRI. Other state-of-the-art (SOTA) lesion inpainting methods, such as DiffTumor~\cite{chen2024towards} and SynBT~\cite{yang2025synbt}, leverage latent diffusion models (LDMs)~\cite{rombach2022high}. These approaches use a healthy image as condition where the lesion region is zeroed out, and the LDM is trained to reconstruct the lesion. While latent representations enable prostate bpMRI synthesis, this conditioning strategy can lead to shortcut learning, causing the model to reconstruct the healthy image rather than inpaint lesion texture. Fig.~\ref{fig:shortcut_learning} illustrates shortcut learning in DiffTumor~\cite{chen2024towards}. Fig.~\ref{fig:shortcut_learning} (a) shows a ground truth T2‑weighted (T2w) training example with a real lesion, and (b) the corresponding healthy condition image with the lesion region filled with zeros (gray). After 200,000 training steps, the generated output in (c) exhibits the shortcut failure mode, reproducing the healthy image instead of synthesizing lesion texture. Fig.~\ref{fig:shortcut_learning} (d) and (e) show the same behavior when the conditioning image is filled with $-1$ values (black).

In this work, we propose PCaPaint, a novel prostate cancer inpainting method based on LDMs that generates realistic synthetic lesions in healthy prostate bpMRI cases, while explicitly addressing shortcut learning. Our main \textbf{contributions} are as follows.
\begin{enumerate}
    \item 
    We introduce a simple yet efficient conditioning strategy where the condition image is filled with Gaussian noise and provide theoretical justification that it mitigates shortcut learning. In addition, we propose a training objective for LDM that emphasizes the error within the lesion region to further mitigate this failure mode.
    \item We introduce a novel multi‑sequence latent design, in which T2w scans and DWI\&ADC scans are compressed using two separate autoencoders to preserve their distinct frequency characteristics.
    \item Extensive experiments show that our synthetic data improves performance in three downstream tasks. Moreover, our method significantly outperforms a SOTA LDM-based tumor inpainting method both in downstream performance and in synthetic image quality.
\end{enumerate}

\begin{figure}[t]
\includegraphics[width=\textwidth]{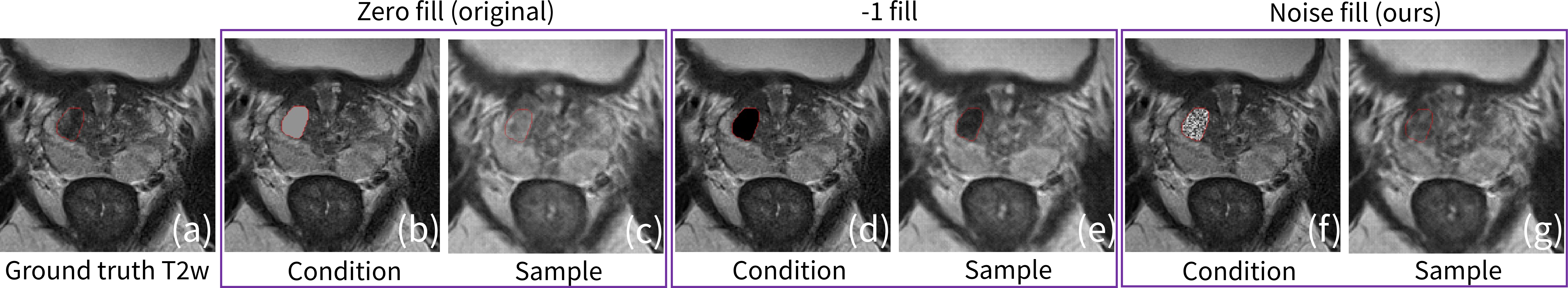}
\caption{DiffTumor~\cite{chen2024towards} samples under different conditioning strategies. Panels show the condition and sample for zero fill (gray),  $-1$ fill (black) and the proposed noise fill.}\label{fig:shortcut_learning}
\end{figure}

\section{Methods}
\subsection{Conditioning with Noise Fill} \label{subsec:noise_fill}
We provide theoretical justification that filling the lesion region in the condition image with noise mitigates shortcut learning. Proposition~\ref{prop:noise_fill} assumes a diffusion model~\cite{ho2020denoising}, but the reasoning extends to LDM~\cite{rombach2022high}. This is illustrated qualitatively in Fig.~\ref{fig:shortcut_learning} (f) and (g) where DiffTumor~\cite{chen2024towards} generates improved lesion texture, and quantitatively in Section~\ref{subsec:img_quality_eval}. 

Let $\bm{x}_0$ denote the lesion image and $\bm{h}$ the healthy condition image. The healthy image is defined as $\bm{h} = (1 - \bm{m}) \odot \bm{x}_0 + \bm{m} \odot \bm{\eta}$, where $\bm{m}$ is the lesion mask and $\bm{\eta}$ is the tensor used to fill the lesion region. In LDM-based tumor inpainting methods, $\bm{\eta}$ is the all-zero tensor~\cite{yang2025synbt,chen2024towards}. We define the lesion difference as $\bm{d} \coloneqq \bm{x}_0 - \bm{h} = \bm{m} \odot (\bm{x}_0 - \bm{\eta})$. The diffusion forward process defines the noisy lesion image at timestep $t$ as
$\bm{x}_t = \sqrt{\bar{\alpha}_t}\bm{x}_0 + \sqrt{1 - \bar{\alpha}_t}\bm{\epsilon}$, where $\bm{\epsilon} \sim \mathcal{N}(\bm{0}, \bm{I})$, $\bar{\alpha}_t = \prod_{s=1}^t \alpha_s$ and $\alpha_t = 1 - \beta_t$, with $\{\beta_t\}_{t=1}^T$ denoting the noise schedule and $T$ the number of time steps. The diffusion model $\bm{\epsilon}_\theta$ predicts the noise as $\hat{\bm{\epsilon}} = \bm{\epsilon}_\theta(\bm{x}_t, t, \bm{h}, \bm{m})$. Fig.~\ref{fig:shortcut_learning} (b)-(e) shows that DiffTumor learns a shortcut and denoises $\bm{x}_t$ by simply reconstructing the healthy image $\bm{h}$ when the fill tensor is constant. We assume the diffusion model exploits a shortcut by approximating the analytically computable residual $\bm{r}_t$ as its noise prediction
\begin{equation}
\bm{r}_t = \frac{\bm{x}_t - \sqrt{\bar{\alpha}_t} \, \bm{h}}{\sqrt{1 - \bar{\alpha}_t}}.    
\end{equation}
This solution depends only on the inputs $(\bm{x}_t, \bm{h}, \bm{m}, t)$ available to the network and the fixed noise schedule.

\begin{proposition} \label{prop:noise_fill}
Assume the network exploits the shortcut $\hat{\bm{\epsilon}}=\bm{r}_t$.
Consider (i) constant fill $\bm{\eta} = \bm{c}$ and
(ii) Gaussian noise fill $\bm{\eta} \sim \mathcal{N}(\bm{c}, \sigma^2 \bm{I})$, where $\bm{c}$ is a constant tensor and $\sigma > 0$ denotes the standard deviation. Then, the difference of the expected training losses satisfies
\begin{equation}
\mathcal{L}_{\mathrm{noise}} - \mathcal{L}_{\mathrm{const}} = \kappa \sigma^2 \mathbb{E}_{\bm{x}_0} \left[ n_m(\bm{x}_0)\right] > 0 \, ,
\end{equation}
where $t \sim \mathrm{Uniform}(\{1, ..., T\})$, $\kappa_t \coloneqq \frac{\sqrt{\bar{\alpha}_t}}{\sqrt{1-\bar{\alpha}_t}}$ and $\kappa = \mathbb{E}_{t} \left[ \kappa_t^2 \right] > 0$, $n_m(\bm{x}_0) > 0$ is the number of masked voxels for $\bm{x}_0$. Noise fill increases the loss for the shortcut solution, encouraging the network to learn the intended denoising.
\end{proposition}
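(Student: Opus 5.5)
Under the shortcut assumption, the plan is to compute the per-sample training loss $\|\bm{\epsilon} - \bm{r}_t\|^2$ in closed form and then average over the fill tensor, the forward-process noise, the timestep and the data. First substitute the forward process into the residual. Since $\bm{x}_t = \sqrt{\bar{\alpha}_t}\bm{x}_0 + \sqrt{1-\bar{\alpha}_t}\bm{\epsilon}$, we get
\begin{equation*}
\bm{r}_t = \bm{\epsilon} + \frac{\sqrt{\bar{\alpha}_t}}{\sqrt{1-\bar{\alpha}_t}}(\bm{x}_0 - \bm{h}) = \bm{\epsilon} + \kappa_t \bm{d}.
\end{equation*}
Hence the shortcut error is $\bm{\epsilon} - \bm{r}_t = -\kappa_t \bm{d}$, and the forward-process noise $\bm{\epsilon}$ cancels exactly. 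The per-sample loss is therefore $\kappa_t^2 \|\bm{d}\|^2 = \kappa_t^2 \|\bm{m}\odot(\bm{x}_0-\bm{\eta})\|^2$.

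Next, write $\bm{\eta} = \bm{c} + \sigma\bm{z}$ with $\bm{z}\sim\mathcal{N}(\bm{0},\bm{I})$ independent of $\bm{x}_0$ and $t$; the constant case is $\sigma = 0$. Expanding the square gives
\begin{equation*}
\|\bm{m}\odot(\bm{x}_0-\bm{c})\|^2 - 2\sigma\langle \bm{m}\odot(\bm{x}_0-\bm{c}), \bm{m}\odot\bm{z}\rangle + \sigma^2\|\bm{m}\odot\bm{z}\|^2.
\end{equation*}
Taking the expectation over $\bm{z}$ conditional on $\bm{x}_0$ kills the cross term, because $\mathbb{E}[\bm{z}]=\bm{0}$. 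Since $\bm{m}$ is binary, the last term has expectation $\sigma^2 \sum_i m_i = \sigma^2 n_m(\bm{x}_0)$.

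Finally, because $t$ is drawn independently of $\bm{x}_0$ and $\bm{\eta}$, the expectation factorises as $\mathbb{E}_t[\kappa_t^2]$ times the data expectation. The first term is exactly $\mathcal{L}_{\mathrm{const}}$, so subtracting leaves $\kappa\sigma^2\mathbb{E}_{\bm{x}_0}[n_m(\bm{x}_0)]$. Positivity then follows from three facts:
\begin{itemize}
\item $\sigma>0$ by assumption;
\item $n_m(\bm{x}_0)>0$, since every training example has a nonempty lesion;
\item $\kappa_t^2 = \bar{\alpha}_t/(1-\bar{\alpha}_t) > 0$ for every $t$, provided $0<\bar{\alpha}_t<1$, i.e.\ all $\beta_t\in(0,1)$.
\end{itemize}

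There is no deep obstacle; the care lies in the bookkeeping. We must state the independence assumptions ($\bm{\eta}$ independent of $\bm{x}_0$, $\bm{\epsilon}$ and $t$), treat $\bm{m}$ as a deterministic binary function of $\bm{x}_0$ so that $\bm{m}\odot\bm{m}=\bm{m}$, and use the standard unnormalised squared-$\ell_2$ loss, since averaging over voxels would rescale $n_m$. We also need $\kappa$ to be finite, which holds because $\bar{\alpha}_t<1$ for $t\ge 1$, so the expectations exist. The extension to the LDM setting is the same computation with $\bm{x}_0,\bm{h}$ replaced by their latent encodings. I would only remark on this rather than prove it, since the noise fill then enters through the encoder and the clean $\sigma^2 n_m$ form need not survive.
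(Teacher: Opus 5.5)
Your proposal is correct and follows essentially the same route as the paper: substitute the forward process to get $\bm{r}_t = \bm{\epsilon} + \kappa_t\bm{d}$, factor out $\mathbb{E}_t[\kappa_t^2]$, and split $\mathbb{E}\|\bm{m}\odot(\bm{x}_0-\bm{\eta})\|^2$ into the constant-fill term plus a $\sigma^2 n_m(\bm{x}_0)$ variance term. The paper does the last step with the bias--variance identity $\mathbb{E}\|\bm{Z}\|^2 = \operatorname{tr}(\operatorname{Cov}(\bm{Z})) + \|\mathbb{E}\bm{Z}\|^2$ applied to $\bm{Z} = \bm{M}(\bm{x}_0)(\bm{x}_0-\bm{\eta})$, whereas you expand the square with $\bm{\eta} = \bm{c}+\sigma\bm{z}$, which is the same computation; your explicit conditions for positivity and finiteness of $\kappa$ (e.g.\ $0<\bar{\alpha}_t<1$) are a useful addition that the paper leaves implicit.
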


\begin{proof}
Substituting $\bm{x}_t$ into $\bm{r}_t$ gives 
\begin{equation}
\bm{r}_t = \frac{\sqrt{\bar{\alpha}_t} \, \bm{x}_0 + \sqrt{1 - \bar{\alpha}_t} \, \bm{\epsilon} - \sqrt{\bar{\alpha}_t} \, \bm{h}}{\sqrt{1 - \bar{\alpha}_t}} = 
\bm{\epsilon} + \frac{\sqrt{\bar{\alpha}_t}\,(\bm{x}_0 - \bm{h})}{\sqrt{1 - \bar{\alpha}_t}} = 
\bm{\epsilon} + \kappa_t \bm{d} \, .
\end{equation}
The residual is the sum of the true noise and the scaled lesion difference. The shortcut behavior dominates as the lesion mask $\bm{m}$ approaches the empty mask, since $\bm{r}_t$ then converges to $\bm{\epsilon}$. The training loss $\mathcal{L}$ takes the following form
\begin{equation}
\mathbb{E} \left[ \| \hat{\bm{\epsilon}} - \bm{\epsilon} \|^2 \right] = 
\mathbb{E}_{t, \bm{x}_0, \bm{m}, \bm{\epsilon}, \bm{\eta}} \left[ \| \kappa_t \bm{d} \|^2 \right] = 
\mathbb{E}_{t} \left[ \kappa_t^2 \right] \mathbb{E}_{\bm{x}_0, \bm{m}, \bm{\eta}} \left[ \| \bm{m} \odot (\bm{x}_0 - \bm{\eta}) \|^2 \right] \, . 
\end{equation}
We define the diagonal matrix $\bm{M} \coloneqq \operatorname{diag}(\operatorname{vec}(\bm{m}))$ that performs the same masking operation as $\bm{m}$ when the images are treated as vectors. We assume the lesion mask is provided for each $\bm{x}_0$ via manual annotation, and we treat $\bm{M}=\bm{M}(\bm{x}_0)$ as deterministic given $\bm{x}_0$. Using the law of total expectation and $\bm{\eta}\perp \bm{x}_0$,
\begin{equation} \label{eq5}
\mathcal{L} = \kappa \mathbb{E}_{\bm{x}_0} \left[ \mathbb{E}_{\bm{\eta}} \left[ \| \bm{M}(\bm{x}_0) (\bm{x}_0 - \bm{\eta}) \|^2 \mid \bm{x}_0 \right] \right] \, .
\end{equation}
By the bias-variance decomposition~\cite{bishop2006pattern}, the expected squared $L_2$ norm of a random vector $\bm{X}$ can be decomposed as $\mathbb{E} \left[ \|\bm{X}\|^2 \right] = \operatorname{tr}(\operatorname{Cov}(\bm{X})) + \| \mathbb{E}[\bm{X}] \|^2$. To decompose the inner expectation in Eq.~\eqref{eq5}, we define $\bm{Z} \coloneqq \bm{M}(\bm{x}_0) (\bm{x}_0 - \bm{\eta})$. Then $\mathbb{E} \left[ \bm{Z} \mid \bm{x}_0 \right] = \bm{M}(\bm{x}_0) (\bm{x}_0 - \bm{\mu}_{\eta})$ and $\operatorname{Cov} ( \bm{Z} \mid \bm{x}_0 ) = \bm{M}(\bm{x}_0) \bm{\Sigma}_{\eta} \bm{M} (\bm{x}_0)^T$ where $\bm{\mu}_{\eta} \coloneqq \mathbb{E}[\bm{\eta}]$ and $\bm{\Sigma}_{\eta} \coloneqq \operatorname{Cov}(\bm{\eta})$. Applying the decomposition yields 
\begin{equation} \label{eq6}
\mathbb{E} \left[ \| \bm{Z} \|^2 \mid \bm{x}_0 \right] = \operatorname{tr} \left(\operatorname{Cov} \left(\bm{Z} \mid \bm{x}_0 \right) \right) + \left\| \mathbb{E} \left[ \bm{Z} \mid \bm{x}_0 \right] \right\| ^2 \, . 
\end{equation} 
Substituting into Eq.~\eqref{eq5} gives
\begin{equation} \label{eq7}
\mathcal{L} = \kappa \mathbb{E}_{\bm{x}_0} \left[ \operatorname{tr} \left( \bm{M}(\bm{x}_0) \bm{\Sigma}_{\eta} \bm{M} (\bm{x}_0) ^T \right) + \left\| \bm{M}(\bm{x}_0) (\bm{x}_0 - \bm{\mu}_{\eta}) \right\| ^2
\right] \, .
\end{equation}
In (i) constant fill, $\bm{\Sigma}_{\bm{\eta}} = \bm{0}$ and $\bm{\mu}_{\bm{\eta}} = \bm{c}$, which implies that
\begin{equation} \label{eq8}
\mathcal{L}_{\mathrm{const}} = \kappa \mathbb{E}_{\bm{x}_0} \left[ \left\| \bm{M} (\bm{x}_0) (\bm{x}_0 - \bm{c}) \right\| ^2 \right] \, .
\end{equation}
In (ii) noise fill, $\bm{\Sigma}_{\bm{\eta}} = \sigma^2 \bm{I}, \bm{\mu}_{\bm{\eta}} = \bm{c}$ and $\operatorname{tr} (\bm{M} (\bm{x}_0) \sigma^2 \bm{I} \bm{M} (\bm{x}_0)^T) = \sigma^2 n_m (\bm{x}_0)$, which yields  
\begin{equation} \label{eq9}
\mathcal{L}_{\mathrm{noise}} = \kappa \mathbb{E}_{\bm{x}_0} \left[ \sigma^2 n_m (\bm{x}_0) + \left\| \bm{M} (\bm{x}_0) (\bm{x}_0 - \bm{c}) \right\| ^2 \right] \, .
\end{equation}
Subtracting Eq.~\eqref{eq8} from Eq.~\eqref{eq9} completes the proof.
\end{proof}

\subsection{PCaPaint} \label{subsec:pcapaint}
\subsubsection{Autoencoders}
In the first stage, T2w scans and DWI\&ADC scans are compressed separately to reflect their distinct characteristics: T2w contains rich high-frequency anatomical detail, while ADC and DWI exhibit lower-frequency intensity patterns. We employ the vector-quantized variational autoencoder with adversarial training (VQGAN)~\cite{esser2021taming} for both autoencoders, $i \in \{1, 2\}$. We denote the T2w image as $\bm{x}^{(1)} \in \mathbb{R}^{H \times W \times D}$, where $H, W$ and $D$ represent the height, width, and depth, respectively. We further define $\bm{x}^{(2)} \in \mathbb{R}^{H \times W \times D \times 2}$, representing the DWI and ADC volumes concatenated along the channel dimension. The input MRI tensor $\bm{x}^{(i)}$ is mapped to a continuous latent representation $\bm{z}_e^{(i)} \in \mathbb{R}^{h \times w \times d \times c}$ via the encoder $f^{(i)}$, where $h, w, d$ denote the latent space dimensions and $c$ is the number of channels. In the vector quantization step, each $c$-dimensional latent vector in $\bm{z}_e^{(i)}$ is replaced by its nearest codebook entry from the learned codebook $\mathcal{C} = \{\bm{c}_k\}_{k=1}^K$, resulting in the quantized latent $\bm{z}_q^{(i)} = q(\bm{z}_e^{(i)}) \in \mathbb{R}^{h \times w \times d \times c}$, where $K$ is the codebook size. Lastly, the decoder $g^{(i)}$ reconstructs the quantized latent such that $\hat{\bm{x}}^{(i)} = g^{(i)}(\bm{z}_q^{(i)})$. The loss is defined as $\mathcal{L} = \mathcal{L}_{\text{recon}} + \mathcal{L}_{\text{commit}} + \lambda_{\text{percep}} \mathcal{L}_{\text{percep}} + \lambda_{\text{adv}} \mathcal{L}_{\text{adv}}$, where $\mathcal{L}_{\text{recon}} = \|\bm{x}^{(i)} - \hat{\bm{x}}^{(i)}\|_1$ and $\mathcal{L}_{\text{commit}} = \beta \| \text{sg}[\bm{z}_q^{(i)} - \bm{z}_e^{(i)}] \|_2^2$, with $\text{sg}[\cdot]$ denoting the stop-gradient operator and $\beta$ the commitment cost. The perceptual loss $\mathcal{L}_{\text{percep}}$ measures feature similarity between $\bm{x}^{(i)}$ and $\hat{\bm{x}}^{(i)}$ in a pretrained network, and $\mathcal{L}_{\text{adv}}$ computes the least-squares patch adversarial loss using a patch discriminator~\cite{isola2017image}.

\subsubsection{Latent Diffusion Model} 
In the second stage, we train an LDM~\cite{rombach2022high} conditioned on the healthy image and the lesion mask. Lesion bpMRI $\bm{x}_0$ is encoded to the latent space using the VQGAN encoders, $\bm{z}_0 = \text{concat}(\bm{z}_0^{(1)}, \bm{z}_0^{(2)})$, where $\bm{z}_0^{(i)} = f^{(i)}(\bm{x}_0^{(i)})$ and $i \in \{1, 2\}$. In the forward process, $\bm{z}_0$ is gradually transformed into standard Gaussian noise $\bm{z}_T \sim \mathcal{N}(\bm{0}, \bm{I})$ following a Markov process. The reverse process is parametrized by a denoising U-Net $\bm{\epsilon}_\theta$ that predicts the noise added at each time step. The healthy image is constructed by filling the lesion region with Gaussian noise. Formally, we generate standard Gaussian noise $\bm{\xi} \sim \mathcal{N}(\bm{0}, \bm{I})$, scale it by $\sigma > 0$ and map it to the $(-1, 1)$ range using the $\tanh$ function, $\bm{\eta} = \tanh(\sigma\bm{\xi})$. Then, $\bm{h}$ is defined as $\bm{h} = (1-\bm{m}) \odot \bm{x}_0 + \bm{m} \odot \bm{\eta}$ and encoded into latent $\bm{z}_h$ where $\bm{z}_h = \text{concat}(\bm{z}_h^{(1)}, \bm{z}_h^{(2)})$, $\bm{z}_h^{(i)} = f^{(i)}(\bm{h}^{(i)})$ and $i \in \{1, 2\}$. Furthermore, we downsample the lesion mask to match the latent space dimensions, $\tilde{\bm{m}} = \text{down}(\bm{m})$. The denoising U-Net receives $\bm{z}_h$ and $\tilde{\bm{m}}$ as conditions concatenated to $\bm{z}_t$ along the channel dimension. To further mitigate shortcut learning, we employ a lesion-weighted training objective that amplifies the error inside the lesion region:
\begin{equation} \label{eq10}
\mathbb{E}_{\bm{x}_0, \bm{m}, \bm{\epsilon}, \bm{\xi}, t} \left[ \| \bm{\epsilon} - \bm{\epsilon}_\theta(\bm{z}_t, t, \bm{z}_h, \tilde{\bm{m}}) \|^2 + \lambda \| \tilde{\bm{m}} \odot \bm{\epsilon} - \tilde{\bm{m}} \odot \bm{\epsilon}_\theta(\bm{z}_t, t, \bm{z}_h, \tilde{\bm{m}}) \|^2 \right] \, ,
\end{equation}
where $\lambda$ controls the contribution of the lesion-focused term.

\begin{figure}[t]
\centering
\includegraphics[width=0.83\textwidth]{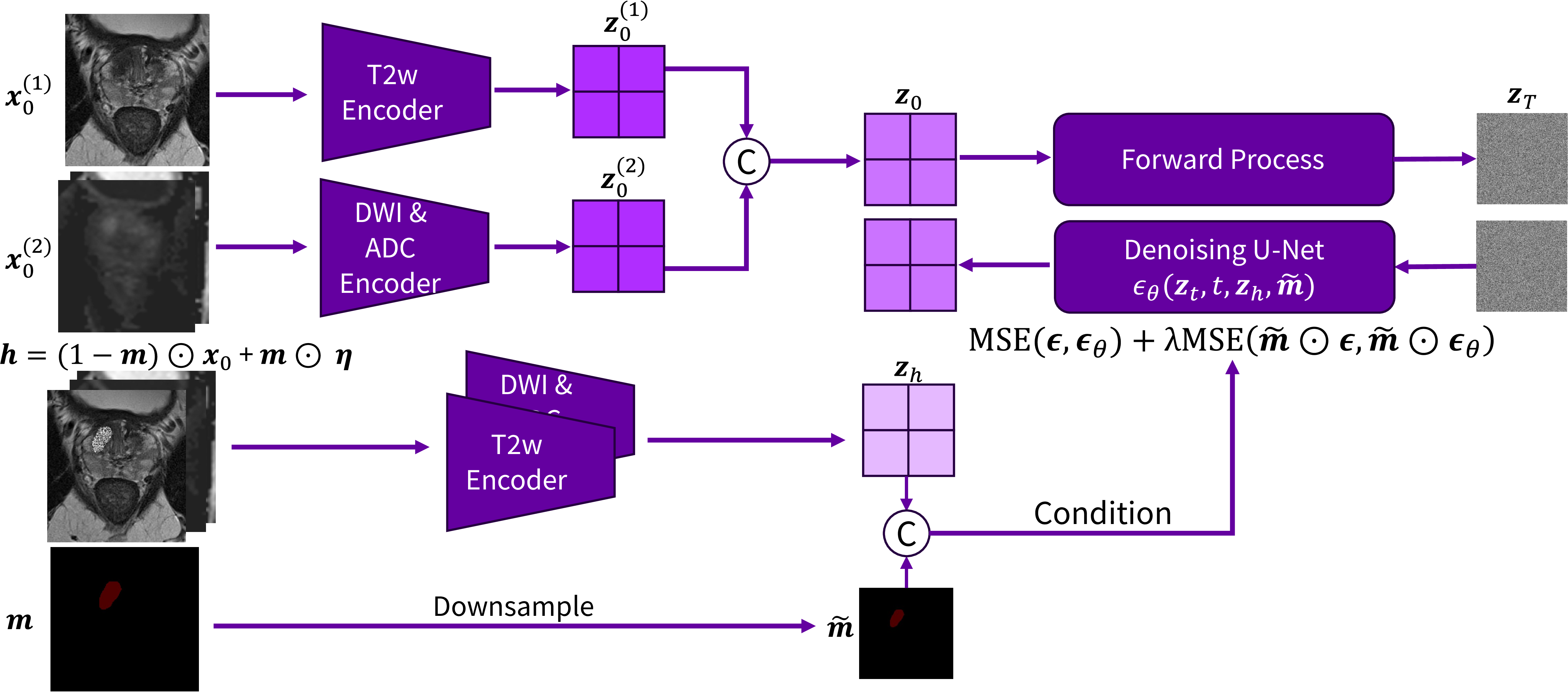}
\caption{Training the latent diffusion model (LDM). The lesion bpMRI $\bm{x}_0$ is encoded by the VQGAN encoders to form the latent $\bm{z}_0$. LDM is conditioned on the latent healthy image $\bm{z}_h$ and the downsampled lesion mask $\tilde{\bm{m}}$. The healthy image $\bm{h}$ is constructed by replacing the lesion region with Gaussian noise. LDM is trained with an objective that amplifies the error in the lesion region.}\label{fig:pcapaint}
\end{figure}

\subsubsection{Sampling}
In the third stage, we generate prostate lesion masks incorporating medical knowledge and inpaint synthetic lesions onto non-lesion bpMRIs. Lesion synthesis is performed by sampling the LDM using DDIM~\cite{song2020denoising}, conditioned on the healthy image and the lesion mask. Given peripheral zone (PZ) and transition zone (TZ) segmentation masks, we sample lesion center with probabilities of 0.75 from PZ and 0.25 from TZ~\cite{mcneal1988zonal}. We then sample a target lesion volume in voxels constrained to lie between minimum and maximum fractions of the whole gland volume. An ellipsoidal lesion is generated by drawing random axis ratios and scaling the ellipsoid to match the target volume while accounting for voxel spacing. To create realistic, irregular boundaries, we perturb the ellipsoid surface using Gaussian-filtered noise field scaled by the mean radius. Finally, we clip the mask to the gland by removing voxels outside the gland segmentation.

\section{Experimental Results}
\subsection{Experimental Setup}
\subsubsection{Data}
Training set consisted of 900 bpMRI cases from the publicly available PI-CAI~\cite{saha2022pi} dataset, including 252 cases with clinically significant prostate cancer (csPCa). The test set comprised of 599 bpMRI cases from PI-CAI, of which 172 were csPCa cases. Preprocessing was performed using the \texttt{picai\_prep} repository~\cite{saha2023artificial}. Each scan was resampled to a resolution of $0.5 \times 0.5 \times 3.0~\mathrm{mm}^3$ and cropped or padded to a spatial size of $256 \times 256 \times 32$ voxels. Image intensities were rescaled to the $[-1, 1]$ range.

\subsubsection{Implementation Details}
For PCaPaint, we trained the two VQGANs using the VQ-VAE~\cite{van2017neural} implementation of MONAI 1.5.0~\cite{cardoso2022monai} for 100,000 steps using the Adam optimizer. The learning rate was set to $5 \times 10^{-5}$ for the T2w VQGAN and to $10^{-4}$ for the DWI\&ADC VQGAN. We set the embedding dimensions to 64 and the codebook size to 8192 for both autoencoders. The denoising U-Net was trained using the DiffTumor~\cite{chen2024towards} implementation for 200,000 steps with Adam and a learning rate of $5 \times 10^{-5}$. The standard deviation of the Gaussian noise fill was set to $\sigma = 0.5$ such that most sampled values lie within the $[-1, 1]$ range. The weight of the second loss term in Eq.~\eqref{eq10} was empirically set to $\lambda = 150$. For DiffTumor, we trained a single VQGAN to compress the bpMRIs with a learning rate of $5 \times 10^{-5}$. To provide larger capacity for the autoencoder, we increased the embedding dimensions to 128. We applied the same training setup as for PCaPaint to ensure a fair comparison. Trainings were performed on an NVIDIA A100 Tensor Core GPU with 40 GB memory.

\subsection{Image Quality Evaluation} \label{subsec:img_quality_eval}
Synthetic image quality was evaluated for DiffTumor, DiffTumor with noise fill (NF) and PCaPaint by inpainting the lesion region of the 172 real bpMRIs in the test set. We computed structural similarity index measure (SSIM) and peak signal-to-noise ratio (PSNR) between the ground truth and synthetic images. In addition, we calculated PSNR separately for the foreground and background regions. Table~\ref{tab1} demonstrates that PCaPaint generates statistically significantly higher quality images compared to DiffTumor across MRI sequences and image regions. Foreground (fg) results show that DiffTumor with NF outperforms DiffTumor, while PCaPaint achieves the best image quality, highlighting the benefits of our noise fill conditioning and lesion-focused loss. PCaPaint also achieves the best performance in background (bg) metrics, underscoring the advantage of compressing bpMRIs using two separate autoencoders.

\begin{table}[t]
\centering
\caption{Synthetic image quality evaluation for DiffTumor~\cite{chen2024towards}, DiffTumor with noise fill (NF) and PCaPaint. SSIM and PSNR were computed between ground truth and synthetic images for test lesion bpMRIs for the entire image, foreground (fg) and background (bg). 95\% bootstrap CI widths are in brackets. Best and second-best are in \textbf{bold} and \underline{underline}, respectively. $^{*}$ indicates statistically significant differences between DiffTumor and PCaPaint at 5\% level using two-sided Wilcoxon signed-rank tests with Holm correction.}\label{tab1}
\begin{tabularx}{0.85\textwidth}{l|l|X|X|X}
\hline
MRI & Metric $\uparrow$ & DiffTumor & DiffTumor NF  & PCaPaint (Ours) \\
\hline
\multirow{4}{*}{T2w} 
  & SSIM    & 0.77 (0.01) & \underline{0.79} (0.01) & \textbf{0.83}$^{*}$ (0.02) \\
  & PSNR    & 20.75 (0.56) & \underline{21.58} (0.60) & \textbf{23.11}$^{*}$ (0.62) \\
  & PSNR fg & 12.38 (0.64) & \underline{16.26} (0.80) & \textbf{20.42}$^{*}$ (0.97) \\
  & PSNR bg & 20.80 (0.55) & \underline{21.60} (0.60) & \textbf{23.11}$^{*}$ (0.62) \\
\hline
\multirow{4}{*}{DWI} 
  & SSIM    & 0.51 (0.06) & 0.51 (0.06) & \textbf{0.64}$^{*}$ (0.05) \\
  & PSNR    & \underline{21.39} (0.66) & 21.16 (0.67) & \textbf{25.54}$^{*}$ (1.26) \\
  & PSNR fg & 15.58 (0.98) & \underline{16.21} (0.89) & \textbf{17.98}$^{*}$ (1.12) \\
  & PSNR bg & \underline{21.43} (0.66) & 21.18 (0.67) & \textbf{25.59}$^{*}$ (1.27) \\
\hline
\multirow{4}{*}{ADC} 
  & SSIM    & 0.71 (0.04) & 0.71 (0.04) & \textbf{0.79}$^{*}$ (0.03) \\
  & PSNR    & 22.37 (0.52) & \underline{22.47} (0.57) & \textbf{25.42}$^{*}$ (0.60) \\
  & PSNR fg & 15.82 (0.71) & \underline{18.31} (0.82) & \textbf{22.27}$^{*}$ (0.88) \\
  & PSNR bg & 22.42 (0.52) & \underline{22.49} (0.57) & \textbf{25.44}$^{*}$ (0.61) \\
\hline
\end{tabularx}
\end{table}

\subsection{Prostate Lesion Downstream Tasks}
We evaluated the utility of synthetic prostate lesion bpMRIs for three downstream tasks on the test set: csPCa segmentation, patient-level csPCa classification and lesion-level csPCa detection. Evaluation metrics included Dice score for segmentation, area under the receiver operating characteristic curve (AUC) for classification and average precision (AP) for detection. The latter two tasks follow the PI-CAI challenge protocol and were evaluated using the \texttt{picai\_eval} repository~\cite{saha2023artificial}. First, we trained nnU-Net~\cite{isensee2021nnu} with default configuration on the training set. We then trained separate nnU-Net models on training sets augmented with synthetic lesions generated by DiffTumor and PCaPaint at a ratio of three synthetic lesions per real lesion~\cite{chen2025scaling}. Fig.~\ref{fig:qualitative_results} illustrates three non-lesion bpMRI cases, each inpainted with synthetic lesion by both methods, using lesion masks described in Section~\ref{subsec:pcapaint}. Results demonstrate that our method correctly inpaints lesion textures with modality-specific intensity---darker in T2w, brighter in DWI, and darker in ADC---and outperforms DiffTumor. Statistical significance for segmentation was assessed using two-sided Wilcoxon signed-rank test. For classification and detection, significance was determined by bootstrapping AUC and AP differences, CIs excluding zero indicated significance. Table~\ref{tab2} shows that synthetic examples from PCaPaint yield the best performance across all downstream tasks, significantly outperforming DiffTumor. Although the Dice improvement is small, paired differences in the Wilcoxon test consistently favored PCaPaint over DiffTumor: 87 positives, 60 negatives and 25 ties across the 172 test lesion cases.

\begin{figure}[t]
\centering
\includegraphics[width=0.98\textwidth]{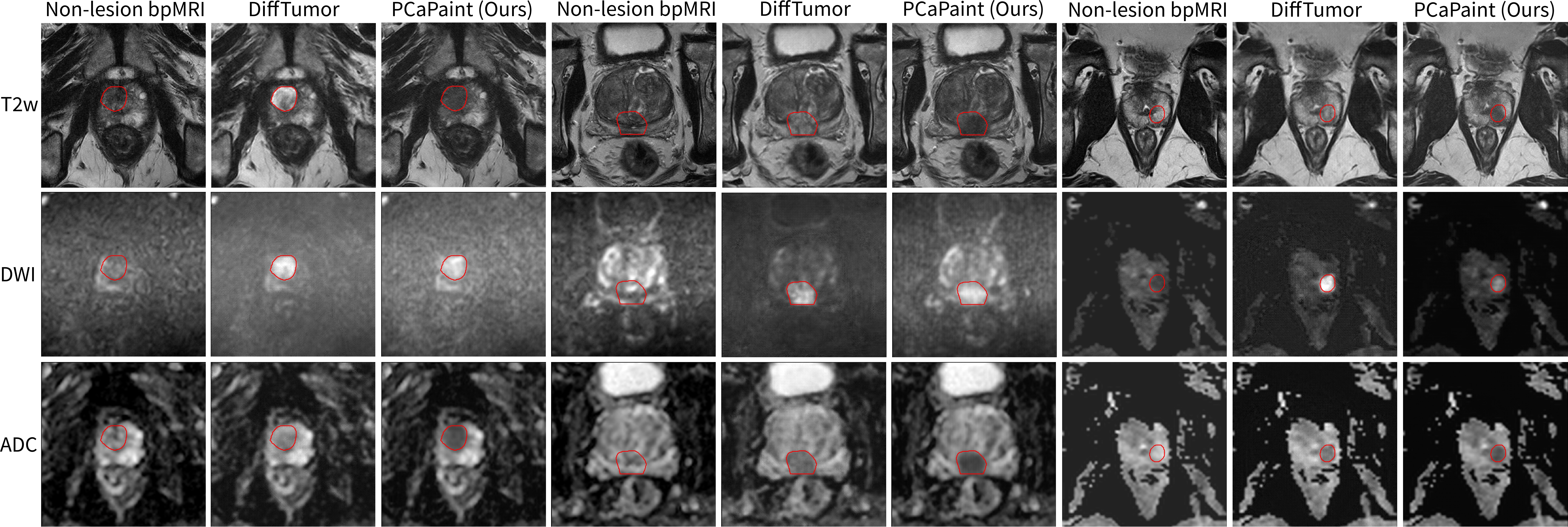}
\caption{Three non-lesion bpMRI examples inpainted with synthetic lesions by DiffTumor~\cite{chen2024towards} and PCaPaint using generated lesion masks.}\label{fig:qualitative_results}
\end{figure}

\begin{table}
\caption{nnU-Net performance on csPCa segmentation, patient-level csPCa classification and lesion-level csPCa detection using real and synthetic data. 95\% bootstrap CI widths are in brackets. Best performance is in \textbf{bold}. $^{*}$ indicates statistically significant differences between Real+DiffTumor and Real+PCaPaint at 5\% level.} \label{tab2}
\begin{tabularx}{\textwidth}{l|X|X|l}
\hline
Downstream Task & Real & Real+DiffTumor & Real+PCaPaint (Ours) \\
\hline
Segmentation, Dice $\uparrow$ & 0.47 (0.08) & 0.47 (0.09) & \textbf{0.48}$^{*}$ (0.08) \\
\hline
Classification, AUC $\uparrow$ & 0.71 (0.09) & 0.71 (0.09) & \textbf{0.76}$^{*}$ (0.08) \\
\hline
Detection, AP $\uparrow$ & 0.28 (0.13) & 0.29 (0.13) & \textbf{0.34}$^{*}$ (0.15) \\
\hline
\end{tabularx}
\end{table}

\section{Conclusion}
In this work, we propose a prostate cancer inpainting method based on LDMs. We address shortcut learning in existing LDM-based tumor inpainting methods by introducing a conditioning strategy with theoretical guarantees. As Proposition~\ref{prop:noise_fill} applies broadly to diffusion-based tumor inpainting, we hypothesize that our conditioning would benefit other imaging modalities and tumor types. Future work will explore these directions. As our synthetic data improves performance on prostate lesion downstream tasks, it will support the development of more robust AI systems for prostate cancer care.

    



%
%

\bibliographystyle{splncs04}
\bibliography{references}

\end{document}